\theoremstyle{plain}
\newtheorem{theorem}{Theorem}[section]
\theoremstyle{definition}
\theoremstyle{remark}
\title{Optimizing Backward Policies in GFlowNets \\ via Trajectory Likelihood Maximization}
\author{Timofei Gritsaev %\thanks{ Use footnote for providing further information
%about author (webpage, alternative address)---\emph{not} for acknowledging
%funding agencies.  Funding acknowledgements go at the end of the paper.} \\
\\
HSE University \\
Constructor University, Bremen \\
\texttt{tgritsaev@gmail.com} \\
\And 
Nikita Morozov
\\
HSE University \\
\texttt{nvmorozov@hse.ru} \\
\And 
Sergey Samsonov
\\
HSE University \\
\texttt{svsamsonov@hse.ru} \\
\AND
Daniil Tiapkin \\
CMAP – CNRS – {\'E}cole polytechnique – Institut Polytechnique de
Paris \\
Université Paris-Saclay, CNRS, Laboratoire de mathématiques d'Orsay  \\
\texttt{daniil.tiapkin@polytechnique.edu} \\
}
\begin{document}

\maketitle

% 5k symbols max
\vspace{-0.4cm}
\begin{abstract}
\vspace{-0.1cm}
Generative Flow Networks (GFlowNets) are a family of generative models that learn to sample objects with probabilities proportional to a given reward function. The key concept behind GFlowNets is the use of two stochastic policies: a forward policy, which incrementally constructs compositional objects, and a backward policy, which sequentially deconstructs them. Recent results show a close relationship between GFlowNet training and entropy-regularized reinforcement learning (RL) problems with a particular reward design. However, this connection applies only in the setting of a fixed backward policy, which might be a significant limitation. As a remedy to this problem, we introduce a simple backward policy optimization algorithm that involves direct maximization of the value function in an entropy-regularized Markov Decision Process (MDP) over intermediate rewards. We provide an extensive experimental evaluation of the proposed approach across various benchmarks in combination with both RL and GFlowNet algorithms and demonstrate its faster convergence and mode discovery in complex environments.
\end{abstract}

%\todoNM{Check affiliations}

\vspace{-0.3cm}
\section{Introduction}
\label{sec:intro}

\vspace{-0.15cm}

Generative Flow Networks (GFlowNets, \citealp{bengio2021flow}) are models designed to sample compositional discrete objects, e.g., graphs, from distributions defined by unnormalized probability mass functions. They operate by constructing an object through a sequence of stochastic transitions defined by a \textit{forward policy}. This policy is trained to match the marginal distribution over constructed objects with the target distribution of interest. Since this marginal distribution is generally intractable, an auxiliary \textit{backward policy} is introduced, and the problem is reduced to one of matching distributions over complete trajectories, bearing similarities with variational inference~\citep{malkin2022gflownets}.

GFlowNets have found success in various areas, such as biological sequence design \citep{jain2022biological}, material discovery~\citep{hernandez2023crystal}, molecular optimization~\citep{zhu2024sample}, recommender systems~\citep{liu2024modeling}, large language model (LLM) and diffusion model fine-tuning \citep{hu2023amortizing, venkatraman2024amortizing, uehara2024understanding, zhang2024improving}, neural architecture search \citep{chen2023order}, combinatorial optimization \citep{zhang2023solving}, and causal discovery \citep{atanackovic2024dyngfn}.

Theoretical foundations of GFlowNets have been established in the seminal works of \cite{bengio2021flow,bengio2021gflownet}. Most of the literature has since focused on practical applications of these models, leaving their theoretical properties largely unexplored, with a few exceptions~\citep{krichel2024generalization, silva2024analyzing}. However, a recent line of works has brought attention to connections between GFlowNets and reinforcement learning~\citep{tiapkin2024generative, mohammadpour2024maximum, deleu2024discrete, he2024rectifying}, showing that the GFlowNet learning problem is equivalent to a specific RL problem with entropy regularization (also called soft RL, \citealp{neu2017unified, geist2019theory}). This opened a new perspective for understanding GFlowNets. The importance of these findings is supported by empirical evidence, as various RL algorithms have proven useful for improving GFlowNets~\citep{tiapkin2024generative, morozov2024improving,  lau2024qgfn}.

However, these connections still have a limitation related to GFlowNet backward policies. While GFlowNets can be trained with a fixed backward policy, standard GFlowNet algorithms allow the training of the backward policy together with the forward policy~\citep{bengio2021gflownet,malkin2022trajectory, madan2023learning}, resulting in faster convergence of the optimization process. Other algorithms for optimizing backward policies have been proposed in the literature as well~\citep{mohammadpour2024maximum, jang2024pessimistic}, showing benefits for GFlowNet performance. The theory connecting GFlowNets and entropy-regularized RL is based on using the backward policy to add a "correction" to GFlowNet rewards and shows the equivalence between two problems only when the backward policy is fixed. Thus, understanding the backward policy optimization remains a missing piece of this puzzle. Moreover, \cite{tiapkin2024generative} demonstrated that this theoretical gap has practical relevance, as optimizing the backward policy using the same RL objective as the forward policy can either fail to improve or even slow down convergence, highlighting the need for a more refined approach.

In this study, we introduce \textit{the trajectory likelihood maximization} (\TLM) approach for backward policy optimization, which can be integrated with any existing GFlowNet method, including entropy-regularized RL approaches.

We first formulate the GFlowNet training problem as a unified objective involving both forward and backward policies. We then propose an alternating minimization procedure consisting of two steps: (1) maximizing the backward policy likelihood of trajectories sampled from the forward policy and (2) optimizing the forward policy within an entropy-regularized Markov decision process that corresponds to the updated backward policy. The latter step can be achieved by any existing GFlowNet or soft RL algorithm, as it was outlined by \cite{deleu2024discrete}. By approximating these two steps through a single stochastic gradient update, we derive an adaptive approach for combining backward policy optimization with any GFlowNet method, \textit{including soft RL methods.}

Our main contributions are as follows:
\vspace{-4pt}
\begin{itemize}[itemsep=0pt,leftmargin=16pt]
    \item We derive the trajectory likelihood maximization (\TLM) method for backward policy optimization.
    \item The proposed method represents the first unified approach for adaptive backward policy optimization in soft RL-based GFlowNet methods. The method is easy to implement and can be integrated with any existing GFlowNet training algorithm.
    \item We provide an extensive experimental evaluation of \TLM in four tasks, confirming the findings of \cite{mohammadpour2024maximum}, which emphasize the benefits of training the backward policy in a complex environment with less structure.
\end{itemize}
\vspace{-0.15cm}
Source code: \href{https://github.com/tgritsaev/gflownet-tlm}{github.com/tgritsaev/gflownet-tlm}.
\vspace{-0.2cm}

\vspace{-0.1cm}
\section{Background}
\label{sec:background}
\vspace{-0.2cm}

\subsection{GFlowNets}
\vspace{-0.1cm}
We aim at sampling from a probability distribution over a finite discrete space \(\mathcal{X}\) that is given as an unnormalized probability mass function \(\cR \colon \mathcal{X} \to \mathbb{R}_{\geq 0}\), which we call the \textit{GFlowNet reward}. We denote \(\mathrm{Z} = \sum_{x \in \mathcal{X}} \cR(x)\) to be an (unknown) normalizing constant.

To formally define a generation process in GFlowNets, we introduce a directed acyclic graph (DAG) \(\mathcal{G} = (\mathcal{S}, \mathcal{E})\), where \(\mathcal{S}\) is a state space and \(\mathcal{E} \subseteq \mathcal{S} \times \mathcal{S}\) is a set of edges (or transitions). There is exactly one state, $s_0$, with no incoming edges, which we refer to as the \textit{initial state}. All other states can be reached from $s_0$, and the set of \textit{terminal states} with no outgoing edges coincides with the space of interest $\mathcal{X}$. Non-terminal states $s \notin \cX$ correspond to ``incomplete'' objects and edges $s \to s'$ represent adding "new components" to such objects, transforming $s$ into $s'$. Let \(\mathcal{T}\) denote the set of all complete trajectories \(\tau = \left(s_0, s_1, \ldots, s_{n_{\tau}}\right)\) in the graph, where \(\tau\) is a sequence of states such that \((s_i \to s_{i + 1}) \in \mathcal{E}\) and that starts at \(s_0\) and finishes at some terminal state \(s_{n_{\tau}} \in \mathcal{X}\). As a result, any complete trajectory can be viewed as a sequence of actions that constructs the object corresponding to $s_{n_{\tau}}$ starting from the "empty object" $s_0$.

We say that a state $s'$ is a child of a state $s$ if there is an edge $(s \to s') \in \mathcal{E}$. In this case, we also say that $s$ is a parent of $s'$. Next, for any state $s$, we introduce the \textit{forward policy}, denoted by $\PF(s'|s)$ for $(s \to s') \in \cE$, as an arbitrary probability distribution over the set of children of the state $s$. In a similar fashion, we define the \textit{backward policy} as an arbitrary probability distribution over the parents of a state $s$ and denote it as $\PB(s'|s)$, where $(s' \to s) \in \cE$.

Given these two definitions, the main goal of GFlowNet training is a search for a pair of policies such that the induced distributions over complete trajectories in the forward and backward directions coincide:
\begin{equation}
\label{eq:tb}
     \prod_{t=1}^{n_\tau} \PF \left(s_{t} \mid s_{t-1}\right) = \frac{\cR(s_{n_\tau})}{\rmZ} \prod_{t=1}^{n_\tau} \PB \left(s_{t-1} \mid s_{t}\right)\,, \quad \forall \tau \in \cT\,.
\end{equation}%\vspace{-0.2cm}
The relation \eqref{eq:tb} is known as the \textit{trajectory balance constraint} \citep{malkin2022trajectory}. We refer to the left and right-hand sides of \eqref{eq:tb} as to the forward and backward trajectory distributions and denote them as 
\begin{equation}
\label{eq:forward_backward_traj}
%\textstyle 
\Ptraj{\PF}(\tau) := \prod_{i=1}^{n_{\tau}} \PF(s_i | s_{i-1})\,, \qquad \Ptraj{\PB}(\tau) := \frac{\cR(s_{n_{\tau}})}{\rmZ} \cdot \prod_{i=1}^{n_\tau} \PB(s_{i-1} | s_i)\eqsp,
\end{equation}
where $\tau = (s_0,s_1,\ldots,s_{n_{\tau}}) \in \cT$. If the condition \eqref{eq:tb} is satisfied for all complete trajectories, sampling a trajectory in the forward direction using \(\PF\) will result in a terminal state being sampled with probability \(\cR(x)/\mathrm{Z}\). We will call such $\PF$ a \textit{proper} GFlowNet forward policy.

In practice, we train a model (usually a neural network) that parameterizes the forward policy (and possibly other auxiliary functions) to minimize an objective function that enforces the constraint \eqref{eq:tb} or its equivalent. The main existing objectives are \textit{Trajectory Balance} (\texttt{TB}, \citealp{malkin2022trajectory}), \textit{Detailed Balance} (\texttt{DB}, \citealp{bengio2021gflownet}) and \textit{Subtrajectory Balance} (\texttt{SubTB}, \citealp{madan2023learning}). The \texttt{SubTB} objective is defined as
\begin{equation}\label{eq:SubTB_loss}
\textstyle
\cL_{\mathrm{SubTB}}(\theta; \tau) = \sum\limits_{0 \le j < k \le n_{\tau}} w_{jk}\left(\log \frac{F_\theta(s_j)\prod_{t=j + 1}^{k} \PF(s_t | s_{t - 1}, \theta)}{F_\theta(s_k) \prod_{t=j + 1}^{k} \PB(s_{t - 1} | s_{t}, \theta)} \right)^2\,,
\end{equation}
where $F_{\theta}(s)$ is a neural network that approximates the \textit{flow} function of the state $s$, see \citep{bengio2021gflownet,madan2023learning} for more details on the flow-based formalization of the GFlowNet problem. Here $F_{\theta}(s)$ is substituted with $\cR(s)$ for terminal states $s$, and $w_{jk}$ is usually taken to be $\lambda^{k - j}$ and then normalized to sum to $1$. \texttt{TB} and \texttt{DB} objectives can be viewed as special cases of \eqref{eq:SubTB_loss}, which are obtained by only taking the term corresponding to the full trajectory or to individual transitions, respectively. All objectives allow either training the model in an on-policy regime using the trajectories sampled from $\PF$ or in an off-policy mode using the replay buffer or some exploration techniques. In addition, it is possible to either optimize $\PB$ along with $\PF$ or to use a fixed $\PB$, e.g., the uniform distribution over parents of each state. One can show that given any fixed $\PB$, there exists a unique $\PF$ that satisfies \eqref{eq:tb}; see, e.g., \citep{malkin2022trajectory}.

\vspace{-0.1cm}
\subsection{GFlowNets as Soft RL}
\label{sec:gfn_as_rl}
\vspace{-0.1cm}

In reinforcement learning~\citep{sutton2018reinforcement}, a typical performance measure of an agent is a \textit{value function}, which is defined as an expected discounted sum of rewards when acting via a given policy. Entropy-regularized reinforcement learning (RL), also known as soft RL (\citealt{neu2017unified, geist2019theory, haarnoja2017reinforcement}), incorporates Shannon entropy $\cH(\PF(\cdot\mid s)) \triangleq \E_{s' \sim \PF(\cdot\mid s)}[-\log \PF(s'\mid s)]$ into the value function. This addition encourages the optimal policy to be more exploratory:
\begin{equation}\label{eq:regularized_value_def}
    \textstyle V^{\PF}_{\lambda}(s; r) \triangleq \E_{\PF}\bigg[ \sum\limits_{t=0}^\infty \gamma^t \big(r(s_t, s_{t+1}) + \lambda \cH(\PF(\cdot \mid s_t))\big) \,\Big|\, s_0 = s \bigg],
\end{equation}
where $\lambda \geq 0$ is the regularization coefficient. Note that we use the next state instead of the more conventional action representation, as there is a one-to-one correspondence between the action taken and the resulting next state in DAG environments. Similarly, the regularized Q-values $Q^{\PF}_{\lambda}(s, s')$ are defined as the expected discounted sum of rewards, augmented by Shannon entropy, given an initial state $s_0 = s$ and the next state $s_1 = s'$. The regularized optimal policy $\cP^{\star}_{\mathrm{F}, \lambda}$ is the policy that maximizes $V^{\PF}_{\lambda}(s)$ for any state $s$. \textbf{Note:} in standard RL notation, a policy is typically denoted as $\pi$. Here, we use $\PF$, consistently with GFlowNet notation, to avoid notational clutter.

It was proven by~\cite{tiapkin2024generative} that training a GFlowNet policy $\PF$ with a fixed $\PB$ can be formulated as a soft RL problem in a deterministic MDP represented by a directed acyclic graph $\mathcal{G}$, where actions correspond to transitions over edges. For transitions $(s \to x)$ that lead to terminal states, the RL rewards are defined as $r^{\PB}(s, x) = \log \PB(s \mid x) + \log \cR(x)$, while for intermediate transitions $(s \to s')$, the rewards are $r^{\PB}(s, s') = \log \PB(s \mid s')$. By setting $\lambda = 1$ and $\gamma = 1$, it can be shown that the optimal policy $\cP^{\star}_{\mathrm{F}, \lambda = 1}(\cdot \mid s)$ in this regularized MDP coincides with the proper GFlowNet forward policy $\PF(\cdot \mid s)$, which is uniquely determined by $\PB$ and $\cR$ (Theorem~1, \citealp{tiapkin2024generative}), thereby inducing the correct marginal distribution over terminal states $\cR(x)/\rmZ$.

In addition, Proposition~1 of~\cite{tiapkin2024generative} provides a connection between the corresponding regularized value function at the initial state $s_0$ for any forward policy $\PF$ (not necessarily proper) and KL-divergence between the induced trajectory distributions:
\[
\textstyle
V^{\PF}_{\lambda = 1}(s_0; r^{\PB}) = \log \rmZ - \KL(\Ptraj{\PF} \Vert \Ptraj{\PB})\,.
\]
The main practical corollary of these results is that any RL algorithm that works with entropy regularization can be utilized to train GFlowNets when $\PB$ is fixed. For example, \cite{tiapkin2024generative} demonstrated the efficiency of the classical \texttt{SoftDQN} algorithm \citep{haarnoja2017reinforcement} and its modified variant called \texttt{MunchausenDQN} \citep{vieillard2020munchausen}. Moreover, it turns out that, under this framework, the existing GFlowNet training algorithms can be derived from existing RL algorithms. \cite{tiapkin2024generative} showed that on-policy \texttt{TB} corresponds to policy gradient algorithms, and \texttt{DB} corresponds to a dueling variant of \texttt{SoftDQN}. At the same time, \cite{deleu2024discrete} showed that \texttt{TB}, \texttt{DB} and \texttt{SubTB} algorithms can be derived from path consistency learning (\texttt{PCL}, \citealp{nachum2017bridging}) under the assumption of a fixed backward policy.

\vspace{-0.1cm}
\subsection{Backward Policies in GFlowNets}
\vspace{-0.1cm}
The idea of backward policy optimization is essential to understanding the GFlowNets training procedure. In particular, the most straightforward approach used in GFlowNet literature \citep{malkin2022trajectory,bengio2021gflownet} proposes to optimize the forward and backward policies directly through the same GFlowNet objective (e.g.,  \eqref{eq:SubTB_loss}). This approach can accelerate the speed of convergence~\cite{malkin2022trajectory}, at the same time potentially leading to less stable training~\citep{zhang2022generative}. This phenomenon motivates studying the backward policy optimization in the recent works of \cite{mohammadpour2024maximum} and \cite{jang2024pessimistic}. 
\par 
\cite{mohammadpour2024maximum} suggested using the backward policy with maximum possible trajectory entropy, thus focusing on the \emph{exploration} challenges of GFlowNets. Such policy is proven to be $\PB(s \mid s') = n(s)/n(s')$, where $n(s)$ is the number of trajectories which starts at $s_0$ and end at $s$. It corresponds to the uniform one if the number of paths to all parent nodes is equal. When $n(s)$ cannot be computed analytically, \cite{mohammadpour2024maximum} propose to learn $\log{n(s)}$, $s \in \mathcal{S}$ alongside the forward policy using its relation to the value function of the soft Bellman equation in the \emph{inverted MDP} (see Definition 2 in \cite{mohammadpour2024maximum}). \cite{mohammadpour2024maximum} utilize RL as a tool to find the maximum entropy backward policy and make the connection to RL solely for such policies. In contrast, our work theoretically considers the simultaneous optimization of the forward and the backward policy from the RL perspective, and develops an optimization algorithm grounded in it. The approach of \cite{mohammadpour2024maximum} showed consistently better results in less structured tasks, like QM9 molecule generation (see \Cref{sec:molecule_description} for a detailed description). At the same time, more structured environments with a less challenging exploration counterpart show less benefits of the proposed backward training approach. 
\par 
In another line of work, \cite{jang2024pessimistic} claim that the existing GFlowNets training procedures tend to under-exploit the high-reward objects and propose a Pessimistic Backward Policy approach. Thus, the primary aim of \cite{jang2024pessimistic} is to focus on the \emph{exploitation} of the current information about high-reward trajectories. Towards this aim, they focus on maximizing the observed backward flow $\Ptraj{\PB}(\tau)$ (see \eqref{eq:forward_backward_traj}) for trajectories leading to high-reward objects, which are stored in a replay buffer. {Our approach shares similarities with this method as both compute the loss over whole trajectories but differs in theoretical motivation and the choice of trajectories for $\PB$ optimization.}
\par 
As a limitation of both \cite{mohammadpour2024maximum} and \cite{jang2024pessimistic}, we mention the fact that only a single GFlowNet training objective is used in both papers (\texttt{SubTB} and \texttt{TB} respectively) to evaluate approaches for backward policy optimization, while we carry out experimental evaluation with various GFlowNet training objectives (see Section~\ref{sec:experiments}).

\vspace{-0.1cm}
\section{Trajectory Likelihood Maximization}
\vspace{-0.1cm}
\label{sec:method}
The objective of our method is to formalize the optimization process for the backward policy for reinforcement learning-based approaches. It is worth mentioning that soft RL methods cannot address the changing of the reward function, except for reward shaping schemes~\citep{ng1999policy} that preserve the total reward of any trajectory. Therefore, we need to return to the underlying GFlowNet problem. Let us look at the following optimization problem:
\begin{equation}
\label{eq:minimize_joint_kl}
\textstyle 
\min_{\PF \in \PiF, \PB \in \PiB} \KL(\Ptraj{\PF} \Vert \Ptraj{\PB}) \,,
\end{equation}
where $\PiF$ and $\PiB$ represent the spaces of forward and backward policies, respectively, and $\Ptraj{\PF}$ and $\Ptraj{\PB}$ are defined in \eqref{eq:forward_backward_traj}. It is easy to see that any solution to the problem \eqref{eq:minimize_joint_kl} satisfies the trajectory balance constraint \eqref{eq:tb} and thus induces a proper GFlowNet forward policy. Additionally, it is worth mentioning that for any fixed $\PB$ the problem \eqref{eq:minimize_joint_kl} is equivalent to maximizing value 
\[
\textstyle 
V^{\PF}_{\lambda=1}(s_0; r^{\PB}) = \log \rmZ - \KL(\Ptraj{\PF} \Vert \Ptraj{\PB})
\]
over $\PF \in \PiF$ in an entropy-regularized MDP, as established in \citet[Proposition~1]{tiapkin2024generative}. Thus, the joint optimization resembles the RL formulation with non-stationary rewards. To leverage the problem's block structure, we propose a meta-algorithm consisting of two iterative steps, repeated until convergence:
\begin{equation}\label{eq:two_stage_method}
    \textstyle 
    \PB^{t+1} \approx \arg\min_{\PB} \KL\left(\Ptraj{\PF^t} \big\Vert \Ptraj{\PB}\right)\,, \qquad
    \PF^{t+1} \approx \arg\min_{\PF} \KL\left(\Ptraj{\PF} \big\Vert \Ptraj{\PB^{t+1}}\right)\,.
\end{equation}
It is worth noting that if these optimization problems are solved exactly, the algorithm converges after the first iteration. This occurs because, for every fixed backward policy $\PB$, there is a unique forward policy $\PF$, such that $\Ptraj{\PF} = \Ptraj{\PB}$, see, e.g., \citep{malkin2022trajectory}, ensuring that the loss function reaches its global minimum.  In the following sections, we provide implementation details on approximating these two steps.

\paragraph{First Step: Trajectory Likelihood Maximization. } Using the connection between forward KL divergence minimization and maximum likelihood estimation (MLE), we formulate the following \textit{trajectory likelihood maximization} objective:
\begin{equation}
\label{eq:tlm_step}
\textstyle 
    \thetaB^{t+1} \approx \arg\min_{\theta} \E_{\tau \sim \PF^t}\left[ \cL_{\mathtt{TLM}}(\theta; \tau) \right], \qquad \cL_{\mathtt{TLM}}(\theta; \tau) := - \sum_{i=1}^{n_{\tau}}\log \PB(s_{i-1} | s_{i}, \theta)\,.
\end{equation}
In this formulation, $\tau = (s_0,s_1,\ldots,s_{n_{\tau}})$ denotes a trajectory generated by the forward policy $\PF^t$. This step seeks to update the backward policy by minimizing the negative log-likelihood of trajectories generated from the forward policy. Additionally, instead of solving \eqref{eq:tlm_step} for exact $\arg\min$ for every $t$, we perform one stochastic gradient update
\[
\textstyle 
\thetaB^{t+1} = \thetaB^t - \gamma \nabla_\theta \cL_{\mathtt{TLM}}(\thetaB^t; \tau)\,.
\]

\paragraph{Second Step: Non-Stationary Soft RL Problem.} To approximate the second step of \eqref{eq:two_stage_method}, we exploit the equivalence between the GFlowNet framework and the entropy-regularized RL problem. This leads to the following expression:
\begin{equation}
\label{eq:rl_step_abstract}
\textstyle
\PF^{t+1} \approx \arg\min_{\PF \in \PiF} \KL\left(\Ptraj{\PF} \big\Vert \Ptraj{\PB^{t+1}}\right) \iff \PF^{t+1} \approx \arg\max_{\PF \in \PiF} V^{\PF}_{\lambda=1}\left(s_0; r^{\PB^{t+1}}\right)\,,
\end{equation}
where $r^{\PB}$ is the RL reward function corresponding to the backward policy $\PB$. This step can be solved using any soft RL method, such as \SoftDQN \citep{haarnoja2018soft}. Additionally, it is noteworthy that all existing GFlowNet algorithms with a fixed backward policy can be viewed as variations of existing RL methods, see, e.g., \citep{deleu2024discrete}. Thus, they can be used to solve the optimization problem in \eqref{eq:rl_step_abstract}.

To mitigate the computational overhead of searching for exact $\arg\min$ in \eqref{eq:rl_step_abstract}, we also propose to perform a single stochastic gradient update in the corresponding GFlowNet training algorithm
\[
    \thetaF^{t+1} = \thetaF^t - \eta \nabla_\theta \cL_{\mathtt{Alg}}(\thetaF^t; \tau, \PB^{t+1})\,,
\]
where $\cL_{\mathtt{Alg}}$ represents the loss function associated with a GFlowNet or soft RL method, such as \SubTB or \SoftDQN. Here, $\tau$ denotes a (possibly off-policy) trajectory. 

The complete procedure can be interpreted as a soft RL method with changing rewards. Our suggested method is summarized in \Cref{algo:tlm} and can be paired with any GFlowNet training method \texttt{Alg} (e.g., \DB, \TB, \SubTB, or \SoftDQN). While for $\PB$ training our approach uses on-policy trajectories, $\PF$ can still be trained off-policy, e.g., by utilizing a replay buffer that stores trajectories or transitions.

\begin{algorithm}[H]
\caption{Trajectory Likelihood Maximization }
\begin{algorithmic}[1]\label{algo:tlm}
   \STATE {\bfseries Input:} Forward and backward parameters $\thetaF^1$, $\thetaB^1$, any GFlowNet loss function $\cL_{\mathtt{Alg}}$,\\ \textit{(optional)} experience replay buffer $\cB$;
   \FOR{$t=1$ {\bfseries to} $N_{\text{iters}}$}
   \STATE Sample a batch of trajectories $\{\tau_k^{(t)}\}_{k=1}^{K}$ from the forward policy $\PF(\cdot | \cdot, \thetaF^t)$; %(optional) with $\varepsilon$-exploration;
   \STATE \textit{(optional)} Update $\B$ with $\{\tau_k^{(t)}\}_{k=1}^{K}$; 
   \STATE Update $\thetaB^{t+1} = \thetaB^{t} - \gamma_{t} \cdot \frac{1}{K} \sum_{k=1}^K \nabla \cL_{\mathtt{TLM}}(\thetaB^{t}; \tau^{(t)}_k)  $, see \eqref{eq:tlm_step};
   % off-policy
   \STATE \textit{(optional)} Resample a batch of trajectories $\{\tau_k^{(t)}\}_{k=1}^{K}$ from $\cB$;
   \STATE Update $\thetaF^{t+1} = \thetaF^{t} - \eta_{t} \cdot \frac{1}{K} \sum_{k=1}^K  \nabla \cL_{\mathtt{Alg}}(\thetaF^{t}; \tau^{(t)}_k, \PB(\cdot|\cdot, \thetaB^{t+1}))  $;
   \ENDFOR
\end{algorithmic}
\end{algorithm}

\paragraph{Convergence of the method}

Next, we show why this method indeed solves the GFlowNet learning problem. First, we introduce a \textit{non-stationary soft reinforcement learning problem} of minimizing the worst-case dynamic average regret
\begin{equation}\label{eq:average_regret}
    \textstyle
    \uregret^T := \frac{1}{T} \sum_{t=1}^T V^{\PF^\star}_{\lambda=1}(s_0; r^{t}) - V^{\PF^t}_{\lambda=1}(s_0; r^{t})\,,
\end{equation}
where $\{r^t\}_{t\in[T]}$ is a sequence of reward functions, and $r^t$ is revealed to a learner before selecting a policy $\PF^t$.
Following \cite{zahavy2021reward}, we conjecture that existing RL algorithms are adaptive to the setting of known but non-stationary reward sequences. The implementation of Sampler player of \texttt{EntGame} algorithm by \cite{tiapkin2023fast} is an example of such a regret minimization algorithm. Additionally, we notice that the optimization of dynamic regret is well-studied in the online learning literature, even in a more challenging setting of revealing the corresponding reward function \emph{after} playing a policy \citep{zinkevich2003online,besbes2015non}.

Next, we provide the convergence result for our two-step procedure, using a stability argument for the first step. The proof is given in Appendix~\ref{app:proof}.

\begin{theorem}\label{th:convergence}
    Assume that (1) the backward updates are stable, i.e., $\sup_{t \geq 0} \norm{\PB^T - \PB^{T+t}}_1 \to 0$ as $T \to \infty$, and (2) the forward updates follow a non-stationary regret minimization algorithm, i.e., $\uregret^T \to 0$ as $T \to \infty$. Then, there exists a proper GFlowNet forward policy $\PF^\star$ that induces the marginal distribution $\cR(x)/\rmZ$ over terminal states, such that $\frac{1}{T} \sum_{t=1}^T \Ptraj{\PF^t} \to \Ptraj{\PF^\star}$ as $T \to \infty$.
\end{theorem}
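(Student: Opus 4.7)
The plan is to combine the two hypotheses in the order they appear: first extract a limit from the backward sequence, then use the regret guarantee to show that the averaged forward trajectory distributions collapse onto this limit. The bridge between the two steps is Proposition~1 of \cite{tiapkin2024generative}, which identifies the regularized soft value with $\log\rmZ$ minus a trajectory KL.

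For the first step, I would note that condition~(1) is the Cauchy property for $\{\PB^t\}$ in $L^1$; since the state and terminal spaces are finite, the set of backward kernels is a compact subset of a finite-dimensional Euclidean space, so there exists $\PB^\infty$ with $\norm{\PB^t - \PB^\infty}_1 \to 0$. The uniqueness fact recalled after~\eqref{eq:SubTB_loss} then produces a proper GFlowNet forward policy $\PF^\star$ satisfying $\Ptraj{\PF^\star} = \Ptraj{\PB^\infty}$; this is the policy promised by the statement, and it induces $\cR(x)/\rmZ$ at the terminal states. Because trajectories have uniformly bounded length in a finite DAG, the map $\PB \mapsto \Ptraj{\PB}$ is Lipschitz in $L^1$, so $\norm{\Ptraj{\PB^t} - \Ptraj{\PF^\star}}_1 \to 0$.

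For the second step, I would read~\eqref{eq:average_regret} as dynamic regret with per-round soft-optimal comparator; then Theorem~1 of \cite{tiapkin2024generative} gives $V^{\PF^\star}_{\lambda=1}(s_0; r^{\PB^t}) = \log \rmZ$ for every $t$, so assumption~(2) becomes
\[
\log \rmZ - \frac{1}{T}\sum_{t=1}^T V^{\PF^t}_{\lambda=1}(s_0; r^{\PB^t}) \longrightarrow 0.
\]
Substituting the identity $V^{\PF^t}_{\lambda=1}(s_0; r^{\PB^t}) = \log \rmZ - \KL(\Ptraj{\PF^t} \Vert \Ptraj{\PB^t})$ (Proposition~1 of \cite{tiapkin2024generative}) gives $\frac{1}{T}\sum_{t=1}^T \KL(\Ptraj{\PF^t} \Vert \Ptraj{\PB^t}) \to 0$. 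Pinsker's inequality followed by Cauchy--Schwarz then upgrades this to $\frac{1}{T}\sum_{t=1}^T \norm{\Ptraj{\PF^t} - \Ptraj{\PB^t}}_1 \to 0$. Combining with step one through the triangle inequality,
\[
\left\lVert \frac{1}{T}\sum_{t=1}^T \Ptraj{\PF^t} - \Ptraj{\PF^\star}\right\rVert_1 \leq \frac{1}{T}\sum_{t=1}^T \norm{\Ptraj{\PF^t} - \Ptraj{\PB^t}}_1 + \frac{1}{T}\sum_{t=1}^T \norm{\Ptraj{\PB^t} - \Ptraj{\PF^\star}}_1,
\]
and noting that the second term on the right is a Cesaro average of a null sequence, I would conclude that $\frac{1}{T}\sum_{t=1}^T \Ptraj{\PF^t} \to \Ptraj{\PF^\star}$ in $L^1$, which is the claim.

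The main obstacle is the correct reading of the comparator $\PF^\star$ in~\eqref{eq:average_regret}: the whole chain above relies critically on $V^{\PF^\star}_{\lambda=1}(s_0; r^{\PB^t}) = \log \rmZ$ for every $t$, which is the defining property of the per-round soft-optimal policy rather than of any fixed forward policy. This forces the dynamic (rather than static) reading of the regret. If one instead wanted a static best-in-hindsight comparator, an additional argument controlling the drift of the reward sequence would be required, and that is precisely where assumption~(1) would re-enter more forcefully. A secondary but necessary technicality is the uniform $L^1$-Lipschitz dependence of $\Ptraj{\PB}$ on $\PB$, which follows from the bounded trajectory length in the DAG but would otherwise fail.
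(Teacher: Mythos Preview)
Your proof is correct and follows the same route as the paper's: Cauchy criterion to extract the backward limit, Proposition~1 of \cite{tiapkin2024generative} to rewrite the regret as an averaged trajectory KL, Pinsker to pass to total variation, and a Cesaro argument for the backward term. The only cosmetic difference is the order of operations: the paper first invokes joint convexity of KL to bound $\KL\bigl(\tfrac{1}{T}\sum_t \Ptraj{\PF^t}\,\Vert\,\tfrac{1}{T}\sum_t \Ptraj{\PB^t}\bigr)$ by $\uregret^T$ and then applies Pinsker once, whereas you apply Pinsker termwise and then Jensen on $\sqrt{\cdot}$---both routes land on the same $\sqrt{2\uregret^T}$ control.
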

During numerical experiments, we observed that {enforcing stability in backward updates, particularly by using a decaying learning rate and a target network, significantly improves convergence in practice}. Furthermore, as the theorem shows, this stability is essential for theoretical convergence. We discuss stability techniques we utilize alongside our algorithm in Appendix~\ref{app:abl}.

% \paragraph{Discussion. } We underline our approach's similarity with a celebrated EM-algorithm \citep{dempster1977maximum} and Hinton's Wake-Sleep algorithm \citep{hinton1995wake}. Both these methods also attempt to address the minimization problem using the block structure and alternating minimization approaches.  Additionally, our approach can be connected to cooperative game theory. Indeed, one may interpret $\PF$ as a Forward player and $\PB$ as a Backward player, and both players attempt to minimize KL-divergence between corresponding trajectory distributions. 

\vspace{-0.15cm}
\section{Experiments}
\label{sec:experiments}%\vspace{-0.05cm}
\vspace{-0.15cm}

\begin{figure}[!t]
    \centering
    \includegraphics[width=1\linewidth]{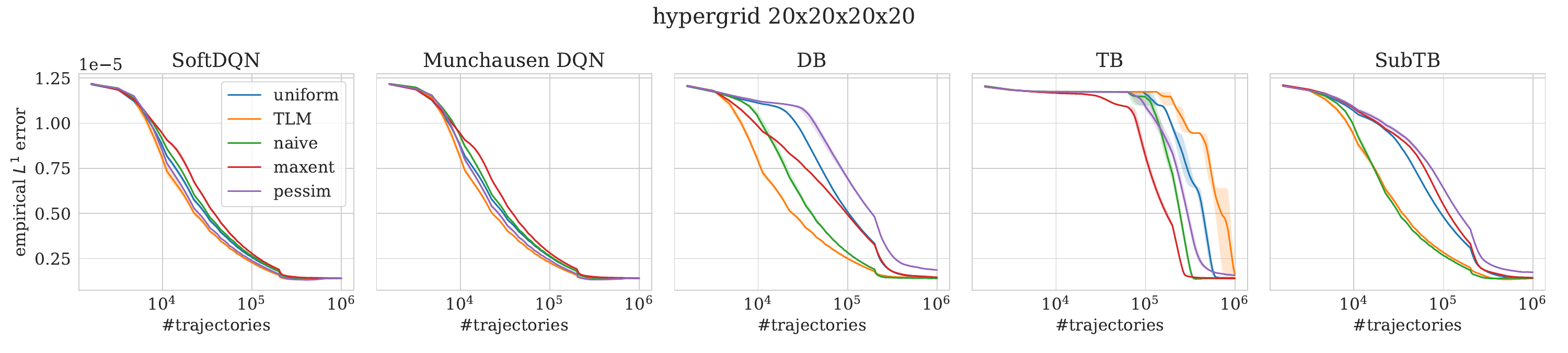}
    \includegraphics[width=1\linewidth]{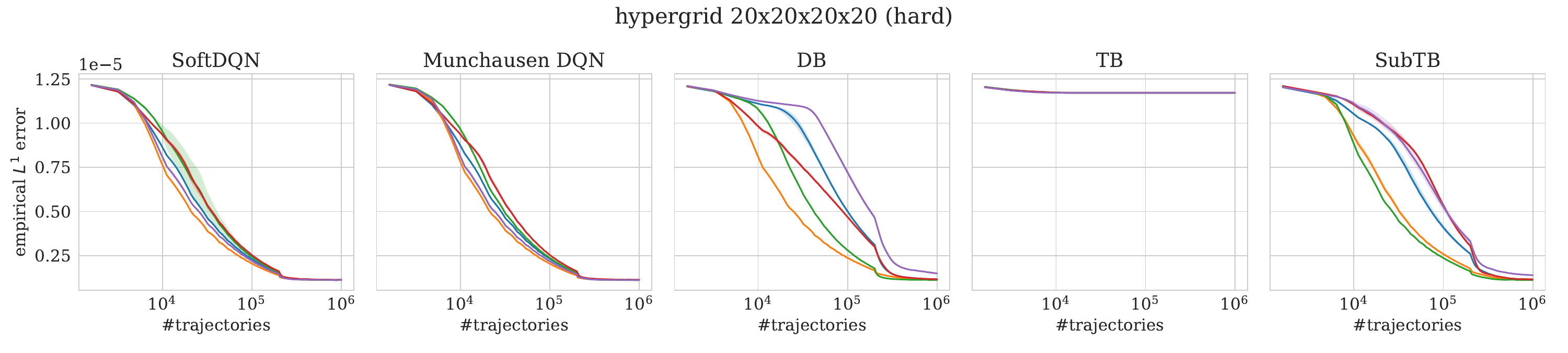}
    % \vspace{-cm}
    \caption{$L^1$ distance between target and empirical sample distributions over the course of training on the standard (\textbf{top row}) and hard (\textbf{bottom row}) hypergrid environments for each method. \emph{Lower values indicate better performance.}} 
    \label{fig:hypergrid}
\end{figure}

We conduct our experimental evaluation on hypergrid~\citep{bengio2021flow} and bit sequence~\citep{malkin2022trajectory} environments, as well as on two molecule design environments: sEH~\citep{bengio2021flow} and QM9~\citep{jain2023multi}. Additional experimental details and hyperparameter choices are provided in Appendix~\ref{app:exp}.

We evaluate four GFlowNet training methods: \MDQN (based on the framework of~\cite{tiapkin2024generative}), \DB~\citep{bengio2021gflownet}, \TB~\citep{malkin2022trajectory}, and \SubTB~\citep{madan2023learning}, which we refer to as \textit{GFlowNet algorithms} (denoted as $\cL_{\mathtt{Alg}}$ in the previous section). On hypergrids, we also include results for \SoftDQN~\citep{tiapkin2024generative}. Alongside these algorithms, we consider five strategies for learning or selecting the backward policy:
\begin{itemize}
    \item our approach (\TLM);
    \item fixed uniform backward policy (\uniform);
    \item simultaneously learning the backward policy with \(\PF\) using the same objective (\naive);
    \item maximum entropy backward policy (\maxent)~\citep{mohammadpour2024maximum};
    \item {pessimistic backward policy} 
 (\pessim)~\citep{jang2024pessimistic}.
\end{itemize}
We refer to these strategies as \textit{backward approaches}. In this section, we denote the distribution induced by a forward policy $\PF$ over terminal states as $P_\theta(x)$ for $x \in \cX$, representing the probability of sampling $x$ from our GFlowNet.

\subsection{Hypergrid}

We start experiments with synthetic hypergrid environments introduced by \citet{bengio2021flow}. These environments are sufficiently small to compute target distribution in the closed form, allowing us to directly examine the convergence of $P_\theta(x)$ to $\cR(x) / \rmZ$.

The environment is a \( d \)-dimensional hypercube with a side length equal to \( H \). The state space is represented as $d$-dimensional vectors $(s_1,\ldots,s_d)^\top \in \{0,\ldots,H-1\}^d$ with the initial state being \( (0, \ldots, 0)^\top \). For each state $(s_1,\ldots,s_{d-1})$, there are at most $d+1$ actions. The first action always corresponds to an exit action that transfers the state to its terminal copy, and the rest of $d$ actions correspond to incrementing one coordinate by $1$ without leaving the grid. The number of terminal states here is $|\cX| = H^d$. There are $2^d$ regions with high rewards near the corners of the grid, while states outside have much lower rewards. The exact expression for the rewards is given in \Cref{app:exp_bit}.

We explore environments with the reward parameters taken from \citet{malkin2022trajectory}, referred to as ``standard case'', and with the reward parameters from \citet{madan2023learning}, referred to as a ``hard case''. In the second case, background rewards are lower, which makes mode exploration more challenging. We conduct experiments on a 4-dimensional hypercube with a side length of 20. As an evaluation metric, we use \(L^1\) distance between the true reward distribution and the empirical distribution of the last \( 2 \cdot 10^5  \) terminal states sampled during training.
% \vspace{-0.3cm}

Figure~\ref{fig:hypergrid} presents the results. For \SoftDQN, \MDQN, and \DB, \TLM shows the fastest convergence for both ``standard'' and ``hard'' reward designs. For the \SubTB algorithm, \TLM shows similar performance to \naive and outperforms \uniform, \maxent and \pessim. \TB is known to have difficulties in this environment~\citep{madan2023learning}, all approaches fail to converge under the ``hard'' reward design. At the same time, with the ``standard'' one, \maxent backward shows the best convergence. An important note is that our results reproduce the findings of~\cite{tiapkin2024generative}: for \SoftDQN and \MDQN training with \uniform backward converges faster than with \naive algorithm, while \TLM shows stable improvement over \uniform. The results and the ranking of algorithms are almost the same for \SoftDQN and \MDQN, so we leave only \MDQN out of two for further experiments.

\begin{figure}[!t]
    \centering
    \includegraphics[width=1\linewidth]{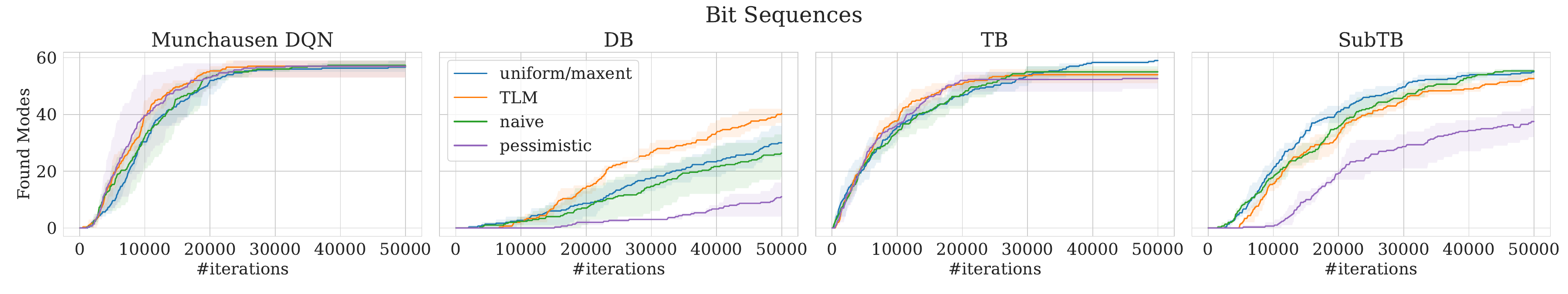}
    \includegraphics[width=1\linewidth]{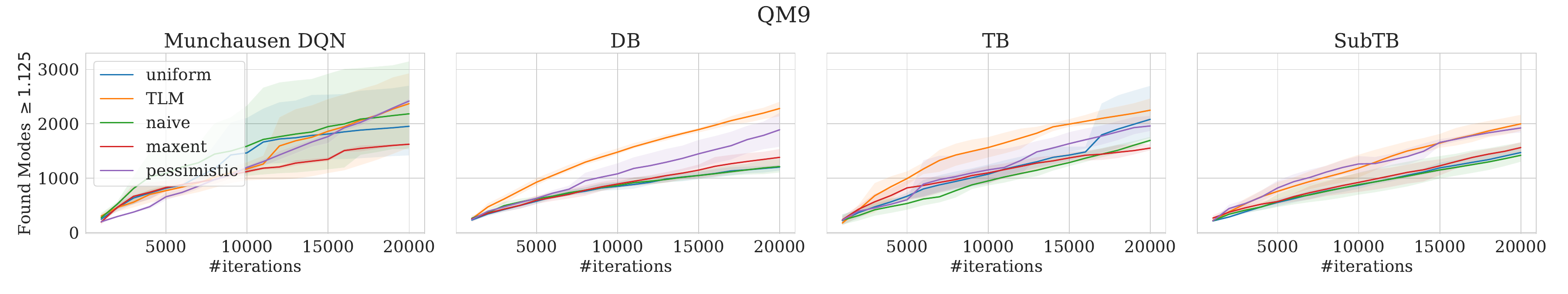}
    \includegraphics[width=1\linewidth]{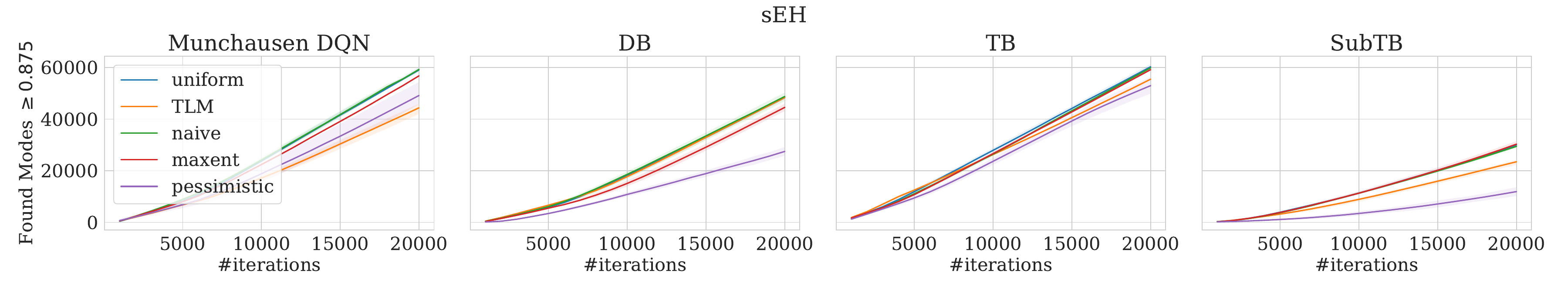}
    % \vspace{-0.3cm}
    \caption{
    \textbf{Top row:} Bit Sequences, the number of discovered modes out of a total of 60 modes for different methods. \textbf{Center row:} QM9, the number of Tanimoto-separated modes with reward higher or equal to $1.125$ for different methods.  \textbf{Bottom row:} sEH, the number of Tanimoto-separated modes with reward higher or equal to $0.875$ for different methods. \emph{Higher values indicate better performance}. {For each pair of a GFlowNet algorithm and a backward approach, the results are presented for the best learning rate chosen in terms of the total number of discovered modes.}} 
    \label{fig:modes_1lr}
\end{figure}

\subsection{Bit Sequences}
In this section, we consider the bit sequence generation task introduced by \citet{malkin2022trajectory}. Following the experimental setup of \cite{tiapkin2024generative}, we modify the state and action spaces to create a non-tree DAG structure, similar to the approach introduced in \citet{zhang2022generative}.

This task is to generate binary sequences of a fixed length $n$, using a vocabulary of $k$-bit blocks.  The state space of this environment corresponds to sequences of $n/k$ words, and each word in these sequences is either an empty word $\oslash$ or one of $2^k$ possible $k$-bit words. The initial state $s_0$ corresponds to a sequence of empty words. The possible actions in each state are replacing an empty word $\oslash$ with one of $2^k$ non-empty words in the vocabulary. The set of terminal states $\cX$ consists of sequences without empty words and corresponds to binary strings of length $n$. The reward function is defined as ${\cR(x) = \exp(-2 \cdot \min_{x' \in \cM} d(x, x'))}$, where $\cM$ is a set of modes and $d$ is Hamming distance. We fix $n = 120$ and $k = 8$ for our experiments. The terminal state space size is $|\cX| = 2^{120}$. Importantly, for this environment, the \uniform backward coincides with \maxent, see Proposition 1 of~\cite{zhang2022generative} and Remark 3 of Theorem 3 of~\cite{mohammadpour2024maximum}. 

To evaluate the performance, we use the same metrics as in \citet{malkin2022trajectory} and \citet{tiapkin2024generative}: the number of modes found during training (number of sequences from $\cM$ for which a terminal state within a distance of $30$ has been sampled) and Spearman correlation on the test set between \( \mathcal{R}(x) \) and an estimate of \( P_\theta \). Since computing the exact probability of sampling a terminal state is intractable due to a large number of paths leading to it, we use a Monte Carlo estimate following the approach of \citet{zhang2022generative}; see also \Cref{app:exp}. {We train all models with various choices of the learning rate, treating it as a hyperparameter, and provide the results depending on its value, similarly to~\citet{madan2023learning}.}

Figure~\ref{fig:modes_1lr} shows the number of modes for different GFlowNet algorithms and backward approaches found over the course of training. We observe that \TLM shows a significant improvement for \DB and a minor one for \MDQN in comparison to other backward approaches, where in the latter case all backward approaches find all 60 modes. \TB and \SubTB also find almost all modes, and \TLM does not affect the results much in comparison to \uniform and \naive, while outperforming \pessim in case of \SubTB. Full plots for modes across varying learning rates are presented in Figure~\ref{fig:modes_bitseq_full} in Appendix. Figure~\ref{fig:corr} (top) presents Spearman correlation between $\cR$ and $P_\theta$ on the test set for the same GFlowNet algorithms and varying learning rates. \TLM shows better or similar performance to the baselines across all GFlowNet algorithms if the optimal learning rate is chosen. Moreover, for \DB and \SubTB, \TLM shows steady improvement over the baselines for all learning rates.

\begin{figure}[!t]
    \centering
    \includegraphics[width=1\linewidth]{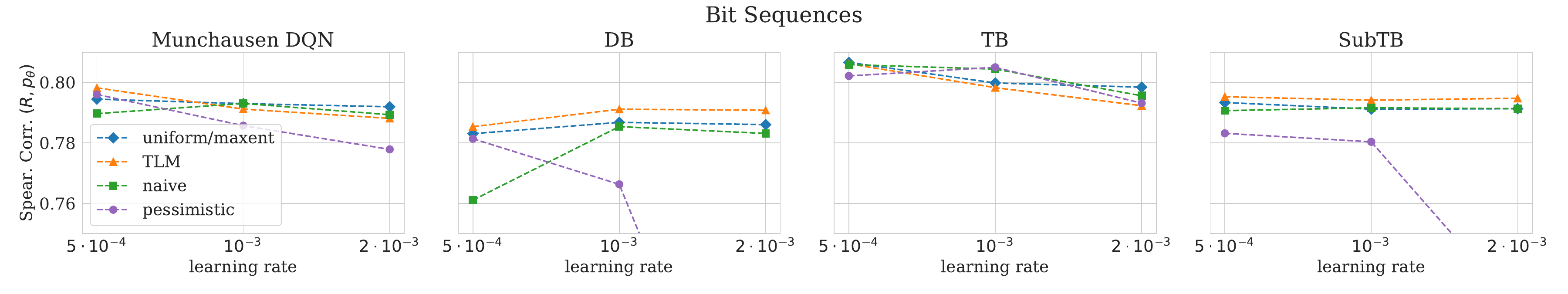}
    \includegraphics[width=1\linewidth]{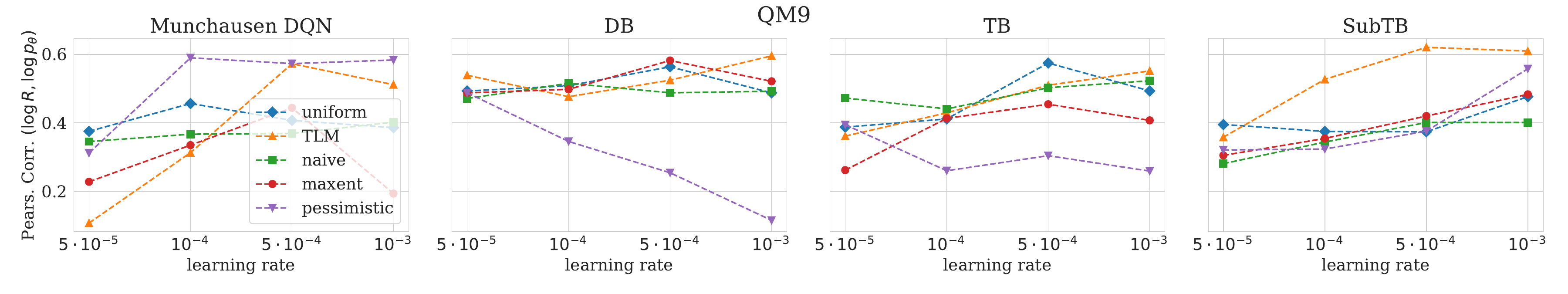}
    \includegraphics[width=1\linewidth]{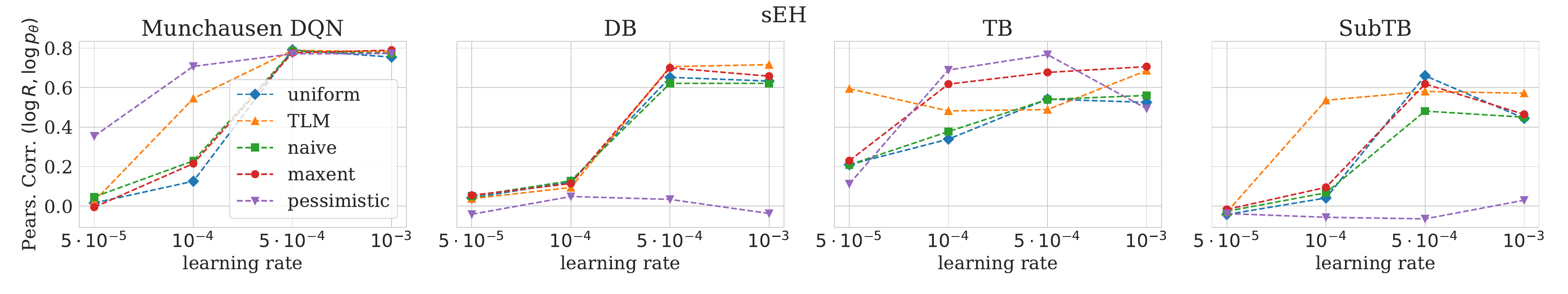}
    \vspace{-0.3cm}
    \caption{\textbf{Top row:} Bit Sequences, Spearman correlation between $\cR$ and $P_\theta$ on a test set for different methods and varying learning rate $\in \{ 5 \cdot 10^{-4}, 10^{-3}, 2 \cdot 10^{-3} \}$. \textbf{Center row:} QM9, Pearson correlation between $\log \cR$ and $\log P_\theta$ on the fixed subset of the QM9 dataset \citep{ramakrishnan2014qm9} for different methods and varying learning rate $\in \{ 5 \cdot 10^{-5}, 10^{-4}, 5 \cdot 10^{-4}, 10^{-3} \}$. \textbf{Bottom row:} sEH, Pearson correlation between $\log \cR$ and $\log P_\theta$ on the test set from \citet{bengio2021flow} for different methods and varying learning rate $\in \{ 5 \cdot 10^{-5}, 10^{-4}, 5 \cdot 10^{-4}, 10^{-3} \}$.\emph{Higher values indicate better performance}. {We note here that \pessim backward policy can be very sensitive to the choice of learning rate.} }
    \label{fig:corr}
\end{figure}

\subsection{Molecule Design, sEH and QM9}
\label{sec:molecule_description}

%\todoDT{Size of the problems?}

Our final experiments are carried out on molecule design tasks of sEH \citep{bengio2021flow} and QM9 \citep{jain2023multi}. In both tasks, the goal is to generate molecular graphs, with reward emphasizing some desirable property. For both problems, we use pre-trained reward proxy neural networks. For the sEH task, the model is trained to predict the binding energy of a molecule to a particular protein target (soluble epoxide hydrolase)~\citep{bengio2021flow}. For the QM9 task, the proxy is trained on the QM9 dataset \citep{ramakrishnan2014qm9} to predict the HOMO-LUMO gap~\citep{zhang2020molecularmechanicsdrivengraphneural}.

For the sEH task, we follow the framework proposed by \citet{jin2020junction} and generate molecules using a predefined vocabulary of fragments. We use the same 72 fragments as in the seminal work of~\cite{bengio2021flow}. It is essential to mention that these fragments are explicitly selected for the sEH task to simplify high-quality object generation. The states are represented as trees of fragments. The actions correspond to choosing a new fragment and then choosing an atom to which the fragment will be attached. There is also a special stop action that moves the state to its terminal copy and stops the generation process.

For the QM9 task, molecules are generated atom-by-atom and bond-by-bond. Every state is a
connected graph, and actions either add a new node and
edge or set an attribute on an edge. Thus, the graph-building environment is much more expressive than the fragment-based tree-building environment, but it results in a more complex generation task and can lead to construction of invalid molecules. {It is worth mentioning that there exists a simpler fragment-based version of this environment in GFlowNet literature~\citep{shen2023towards, chen2023order}, while we consider the more complex atom-by-atom setup from~\cite{mohammadpour2024maximum}.}

We use the same evaluation metrics for both tasks as proposed in previous works~\citep{bengio2021flow, madan2023learning, tiapkin2024generative}. We track the number of Tanimoto-separated modes above a certain reward threshold captured over the course of training, and Pearson correlation on the test set between $\log \cR(x)$ and $\log P_{\theta}(x)$. For sEH task we use the same test set as in \citet{bengio2021flow}, and for QM9 we use a subset of the dataset introduced in~\cite{ramakrishnan2014qm9}. {We train all models with various choices of the learning rate, treating it as a hyperparameter, and provide the results depending on its value, similarly to~\citet{madan2023learning}.}

{Figure~\ref{fig:modes_1lr} (center and bottom) shows the number of modes for different GFlowNet algorithms and backward approaches found over the course of training. \TLM speeds up mode discovery on QM9 when utilized alongside \DB and \DB and performs on par with the best baseline backward approaches for \MDQN and \SubTB, but shows similar or worse performance when compared to other backward approaches on sEH. However, we note that on sEH no backward approach shows significant improvement over \uniform in terms of mode discovery. Full plots for modes across varying learning rates are presented in Figure~\ref{fig:modes_mols_full} in Appendix. It is worth noting that on the QM9 task, \TLM shows robust improvement over the baselines in the majority of configurations. Figure~\ref{fig:corr} (center and bottom) shows Pearson correlation between \(\log \cR\) and \(\log P_{\theta}\) estimate measured on the test set for various learning rates. \TLM results in better correlations when paired with \SubTB, shows comaprable results with the best baselines when paired with \DB and \TB, and falls behind \pessim when paired with \MDQN while outperforming other backward approaches. However, as Figure~\ref{fig:corr} indicates, \pessim backward policy can be very sensitive to the choice of the learning rate and the GFlowNet algorithm that is used for forward policy training.}

\vspace{-0.05cm}
\subsection{{Discussion}}
\vspace{-0.05cm}

{
From the plots above, one can see that across all GFlowNet algorithms (forward policy training objectives), \TLM generally shows performance that is better or comparable to other backward approaches. The sole exception is the number of discovered modes in the sEH environment, where \TLM can fall behind other backward approaches. To explain this shortcoming, we hypothesize that \TLM is more beneficial in environments with less structure. This is supported by the major improvements to mode exploration that it obtains on QM9, while sometimes even degrading the same metric on sEH. Indeed, molecules in the sEH task are constructed from the predefined set of blocks, while in the QM9 task, they are created from atoms. This manually predefined set adds a lot of structure into the environment, in which one creates a junction tree from these blocks. In comparison, the process of creating an arbitrary graph of atoms imposes less structure and can even lead to construction of invalid molecules in certain cases. We suppose that the strong methodological bias is a possible reason why it is of little utility to consider non-trivial backward approaches in the sEH task and why the \uniform backward approach often has the best or at least comparable performance according to Figure~\ref{fig:modes_mols_full}. This is exactly the hypothesis that was initially put forward by \cite{mohammadpour2024maximum}, and our results align with it well.}

\vspace{-0.25cm}
\section{Conclusion}
\vspace{-0.2cm}
\label{sec:conclusion}%\vspace{-0.05cm}

In this work, we propose a new method for backward policy optimization that enhances mode exploration and accelerates convergence in complex GFlowNet environments. \TLM represents the first principled method for learning a backward policy in soft reinforcement learning-based GFlowNet algorithms, such as \SoftDQN and \MDQN. We provide an extensive experimental evaluation, demonstrating benefits of \TLM when it is paired with various forward policy training methods, and analyze its shortcomings, arguing that our results support the hypothesis of~\citet{mohammadpour2024maximum} about benefits of backward policy optimization in environments with less structure.

A promising further work direction is using backward policy for exploration as proposed in \cite{kim2023local} and \cite{he2024looking}. Indeed, the ability to sample trajectories starting from high-reward terminal states via $\PB$ provides an opportunity to improve mode exploration. We expect that combining such methods with \TLM-like approaches will additionally improve their performance.

\newpage

\subsubsection*{Acknowledgments}
The authors are grateful to Dmitry Vetrov for valuable
discussions and feedback. The paper was prepared within the framework of the HSE University Basic Research Program. This research was supported in part through computational resources of HPC facilities at HSE University~\citep{kostenetskiy2021hpc}. The work of Daniil Tiapkin was supported by the Paris Île-de-France Région in the framework of DIM AI4IDF.

\bibliography{iclr2025_conference}
\bibliographystyle{iclr2025_conference}

\newpage 

\appendix
\section{Appendix}
\subsection{Omitted proofs}\label{app:proof}
\begin{proof}[Proof of Theorem~\ref{th:convergence}]
    From the stability of backward updates, the Cauchy criterion implies that there is $\PB^\star \in \PiB$ such that $\PB^T \to \PB^\star$. At the same time, by the choice or rewards $r^t = r^{\PB^t}$, Proposition~1 by \cite{tiapkin2024generative} and joint convexity of KL-divergence
    \[
        \uregret^T = \frac{1}{T} \sum_{t=1}^T \KL\left(\Ptraj{\PF^t} \big\Vert \Ptraj{\PB^{t}}\right) \geq \KL\left(\frac{1}{T} \sum_{t=1}^T 
 \Ptraj{\PF^t} \bigg\Vert \frac{1}{T}\sum_{t=1}^T \Ptraj{\PB^{t}}\right)\,.
    \]
    Notice that a mapping $\PB \mapsto \Ptraj{\PB}$ is continuous, thus $\Ptraj{\PB^{T}} \to \Ptraj{\PB^\star}$, and, as a result, averages of $\Ptraj{\PB^{t}}$ also converge to $\Ptraj{\PB^\star}$. Finally, applying Pinkser's inequality, we have
    \[
        \textstyle \big\Vert \frac{1}{T} \sum_{t=1}^T 
 \Ptraj{\PF^t} - \Ptraj{\PB^\star} \big\Vert_1 \leq \sqrt{2\uregret^T} +  \norm{\frac{1}{T} \sum_{t=1}^T 
 \Ptraj{\PB^t} - \Ptraj{\PB^\star}}_1\,.
    \]
    The right-hand side of the inequality tends to zero as $T \to +\infty$, thus $\frac{1}{T} \sum_{t=1}^T 
 \Ptraj{\PF^t} \to \Ptraj{\PB^\star}$. Finally, since for any $\PB^\star$ there is $\PF^\star$ such that $\Ptraj{\PB^\star} = \Ptraj{\PF^\star}$, we conclude the statement.
\end{proof}
\subsection{Stability Techniques}
\label{app:abl}

In this section, we highlight important practical techniques and design choices motivated by Theorem~\ref{th:convergence} that we use alongside our \TLM algorithm to enforce stability into the training of $\PB$. 

First, we found it beneficial to either use a lower learning rate for the backward policy or decay it over the course of training (see the next sections for detailed descriptions). 

Second, akin to how the Deep Q-Network algorithm \citep{mnih2015human} utilizes a target network to estimate the value of the next state, we utilize target networks for the backward policy when calculating the loss for the forward policy. For example, \eqref{eq:SubTB_loss} transforms into
\begin{equation}
\label{eq:SubTB_loss_wtargetnet}
\textstyle
\mathcal{L}_{\mathrm{SubTB}}(\theta; \tau) = \sum\limits_{0 \le j < k \le n_{\tau}} w_{jk}\left(\log \frac{F_\theta(s_j)\prod_{t=j + 1}^{k} \PF(s_t | s_{t - 1}, \theta)}{F_\theta(s_k) \prod_{t=j + 1}^{k} \textcolor{red}{\PB(s_{t - 1} | s_{t}, \bar \theta)}} \right)^2
\end{equation}
where the parameters $\bar \theta$ of $\PB(s_{t - 1} | s_{t}, \bar \theta)$ are updated via exponential moving average (EMA) of the online parameters $\theta$ of $\PB(s_{t - 1} | s_{t}, \theta)$. So the loss for the backward policy $\cL_{\mathtt{TLM}}$ is computed using an online backward policy $\PB(s_{t - 1} | s_{t}, \theta)$, and the loss for the forward policy $\cL_{\mathtt{Alg}}$ is computed using a target backward policy $\PB(s_{t - 1} | s_{t}, \bar \theta)$, which is frozen during the gradient update of $\PF$.

Finally, we find it helpful to initialize $\PB$ to the uniform distribution at the beginning of training, which is done by zero-initialization of the last linear layer weight and bias. 

We ablate the impact of the proposed techniques on QM9, where we try to turn off each of the three separately. Results are presented in Figure~\ref{fig:qm9_ablation}. We observe that using a target model and a lower learning rate is crucial, whereas disabling uniform initialization increases variance and shows slightly worse results. For this experiment, we choose \DB as the base algorithm because \TLM overall shows the most significant impact when applied with it compared to \TB, \SubTB, and \MDQN.

\begin{figure}[t!]
    \centering
    \includegraphics[width=0.9\linewidth]{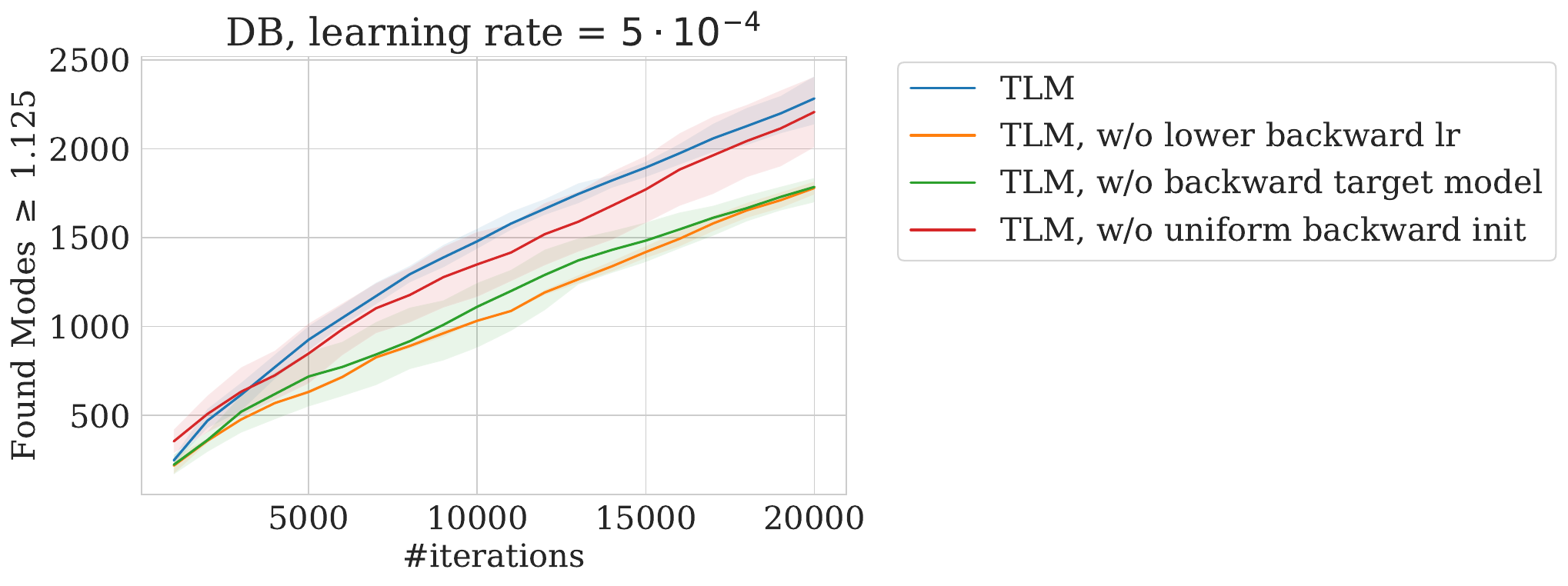}
    \caption{Ablation study of stability techniques on QM9. The number of Tanimoto-separated modes with a reward at least $1.125$ is shown. As a base algorithm, we use \DB with a learning rate of $5 \cdot 10^{-4}$.} 
    \label{fig:qm9_ablation}
\end{figure}
\subsection{Experimental Details}
\label{app:exp}

We utilize PyTorch~\citep{paszke2019pytorch} in our experiments. For hypergrid and bit sequence environments, we base our implementation upon the published code of~\cite{tiapkin2024generative}. For molecule design experiments, our implementations are based on the open source library by Recursion Pharmaceuticals.\footnote{ \href{https://github.com/recursionpharma/gflownet}{\texttt{https://github.com/recursionpharma/gflownet}}} In all our experiments, $\PF$ and $\PB$ share the same neural network backbone, predicting logits via two separate linear heads. For all experiments, mean and std values are computed over three random seeds. 

An important note is that we do one stochastic gradient update over the replay buffer for \pessim backward policy approach~\citep{jang2024pessimistic} at each training iteration to allow for a fair comparison with other methods. \cite{jang2024pessimistic} perform 8 stochastic gradient updates for \pessim $\PB$ at each training iteration in the experiments in their paper, which results in a discrepancy in the number of gradient steps when comparing the approach to other methods.

\vspace{-0.1cm}
\subsubsection{Hypergrid}\label{app:exp_grid}

The reward at a terminal state $s$ with coordinates $(s^1, \ldots, s^D)$ is defined as
\begin{equation*}
\cR(s) = R_0 + R_1 \times \prod_{i = 1}^D \mathbb{I}\left[0.25 < \left|\frac{s^i}{H-1}-0.5\right|\right] + R_2 \times \prod_{i = 1}^D \mathbb{I}\left[0.3 < \left|\frac{s^i}{H-1}-0.5\right| < 0.4\right].
\end{equation*}

\vspace{-0.2cm}
Standard reward uses parameters $(R_0 = 10^{-3}, R_1 = 0.5, R_2 = 2.0)$ and hard reward uses $(R_0 = 10^{-4}, R_1 = 1.0, R_2 = 3.0)$, taken from \cite{bengio2021flow} and \cite{madan2023learning} respectively. 

All models are parameterized using an MLP with 2 hidden layers and 256 hidden units. We use the Adam optimizer with a learning rate of $10^{-3}$ and a batch size of 16 trajectories. For \texttt{SubTB}, we set $\lambda = 0.9$, following \cite{madan2023learning}. For \SoftDQN and \MDQN, we use a prioritized replay buffer~\citep{schaul2016prioritized} and adopt the same hyperparameters as in \cite{tiapkin2024generative}. We sample 256 transitions from the buffer to compute the loss for these methods. 

For the \TLM backward policy, we use the same initial learning rate of $10^{-3}$ with an exponential scheduler, tuning $\gamma$ from \(\{0.999, 0.9999\}\).  The backward policy target network for \TLM uses soft updates with a parameter $\tau=0.25$~\citep{silver2014deterministic}. Since the environment is small, we precompute $\log n(s)$ for all states, allowing us to obtain the \maxent backward exactly. For \pessim backward policy we store trajectories from the last 20 training iterations in the replay buffer and sample 16 trajectories to compute the loss. 

Hypergrid experiments were performed on CPUs. Table~\ref{tab:grid_params} summarizes the chosen hyperparameters.
\begin{table}[t!]
\begin{center}
\caption{Hyperparameter choices for hypergrid experiments.}
\label{tab:grid_params}
\vspace{0.3cm}
\begin{tabular}{|l|c|}
\hline
Hyperparatemeter & Value \\
\hline \hline
    Training trajectories & $10^6$ \\ 
    Learning rate & $10^{-3}$ \\
    $Z$ learning rate for \TB & $10^{-1}$  \\
    Adam optimizer $\beta, \epsilon, \lambda$ & $(0.9, 0.999), 10^{-8}, 0$ \\ 
    Number of hidden layers & $2$ \\
    Hidden embedding size & $256$ \\
\hline
    \SubTB $\lambda$ & $0.9$ \\
\hline
    \MDQN $\alpha$ & $0.15$ \\
    \MDQN $\l_0$ &   $-100$ \\
\hline
    Prioritized RB size & $10^5$ \\
    Prioritized RB $\alpha, \beta$ & $0.5, 0.0$ \\
    Prioritized RB batch size & $256$ \\
\hline
    \pessim buffer size & $20 \times 16$ \\
\hline
    \TLM backward learning rate & $10^{-3}$ \\
    \TLM backward lr decay $\gamma$ & $\{ 0.999, 0.9999 \}$ \\ 
   \TLM backward target model $\tau$  & $0.25$ \\
\hline
\end{tabular}
\end{center}
\end{table}

\subsubsection{Bit Sequences}\label{app:exp_bit}

The set of modes \( M \) is defined as in \cite{malkin2022trajectory}, and we choose the same size, \( |M| = 60 \). We set \( H = \{'00000000', '11111111', '11110000', '00001111', '00111100'\} \). Each sequence in \( M \) is generated by randomly selecting \( n/8 \) elements from \( H \) with replacement and then concatenating them. The test set for evaluating reward correlations is generated by taking a mode and flipping \( i \) random bits in it, where this is repeated for every mode and for each \( 0 \le i < n \).

We utilize the same Monte Carlo estimate for \( P_{\theta} \) as presented in \cite{tiapkin2024generative} with $N = 10$:
\[
    P_{\theta}(x) = \mathbb{E}_{\PB(\tau \mid x)}\left[ \frac{\PF(\tau \mid \theta)}{\PB(\tau \mid x)}\right] \approx \frac{1}{N} \sum_{i=1}^{N} \frac{\PF(\tau^i \mid \theta)}{\PB(\tau^i \mid x)}, \quad \tau^i \sim \PB(\tau \mid x).
\]
Notice that any valid $\PB$ can be used here, but for each model, we take the $\PB$ that was fixed/trained alongside the corresponding $\PF$ since such a choice will lead to a lower estimate variance. However, we note that the metric is still very noisy, so for each training run, we compute the metric for all model checkpoints (done every 2000 iterations) and use the maximum value.

All models are parameterized as Transformers \cite{vaswani2017attention} with 3 hidden layers, 8 attention heads, and a hidden dimension of 64. Each model is trained for 50,000 iterations and a batch size of 16 with Adam optimizer. We provide results for learning rates from \(\{ 5 \times 10^{-4}, 10^{-3}, 2 \times 10^{-3} \}\). For \texttt{SubTB} we use \( \lambda = 0.9 \). For \MDQN, we use a prioritized replay buffer~\citep{schaul2016prioritized} and take the same hyperparameters as in \cite{tiapkin2024generative}. To compute the loss, we sample 256 transitions from the buffer for \MDQN. 

For the \TLM backward policy, we use the same initial learning rate as for the forward policy and utilize an exponential scheduler, where $\gamma$ is tuned from \(\{0.9997, 0.9999\}\). The backward policy target network for \TLM uses soft updates with a parameter $\tau=0.1$~\citep{silver2014deterministic}. For \pessim backward policy we store trajectories from the last 20 training iterations in the replay buffer and sample 16 trajectories to compute the loss. 

To closely follow the setting of previous works~\citep{malkin2022trajectory, madan2023learning, tiapkin2024generative}, we use $\varepsilon$-uniform exploration with $\varepsilon = 10^{-3}$. We note that this can introduce a small bias into the gradient estimate of $\nabla_\theta \cL_{\mathtt{TLM}}(\thetaB^t; \tau)$ since $\tau$ will not be sampled exactly from $\PF$.

Each bit sequence experiment was performed on a single NVIDIA V100 GPU. Table~\ref{tab:bit_params} summarizes the chosen hyperparameters.

\begin{table}[t!]
\begin{center}
\caption{Hyperparameter choices for bit sequence experiments.}
\label{tab:bit_params}
\vspace{0.3cm}
\begin{tabular}{|l|c|}
\hline
Hyperparatemeter & Value \\
\hline\hline 
    Training iterations & $5 \times 10^4$ \\ 
    Learning rate & $\{ 5 \times 10^{-4}, 10^{-3}, 2 \times 10^{-3} \}$ \\
    $Z$ learning rate for \TB & $10^{-1}$  \\
    Adam optimizer $\beta, \epsilon, \lambda$ & $(0.9, 0.999), 10^{-8}, 10^{-5}$ \\ 
    $\varepsilon$-uniform exploration & $10^{-3}$ \\
    Number of transformer layers & $3$ \\
    Hidden embedding size & $64$ \\
    Dropout & $0.1$ \\
\hline
    \SubTB $\lambda$ & $0.9$ \\
\hline
    \MDQN $\alpha$ & $0.15$ \\
    \MDQN $\l_0$ &   $-100$ \\
    \MDQN leaf coefficient &  $5$ \\
\hline
    Prioritized RB size & $10^5$ \\
    Prioritized RB $\alpha, \beta$ & $0.9, 0.1$ \\
    Prioritized RB batch size & $256$ \\
\hline
    \pessim buffer size & $20 \times 16$ \\
\hline
    \TLM backward learning rate & learning rate \\
    \TLM backward lr decay $\gamma$ & $ 0.9999$ \\ 
    \TLM backward target model $\tau$ & $0.1$ \\
\hline
\end{tabular}
\end{center}
\end{table}

\subsubsection{Molecules}\label{app:exp_mol}

For sEH, we use the test set from \citet{bengio2021flow}. For QM9, we select a subset of 773 molecules from the QM9 dataset~\citep{ramakrishnan2014qm9} containing between 3 and 8 atoms. The subset is constructed to ensure an approximately equal representation of different molecule sizes. To compute correlation, we use the same Monte Carlo estimate as in the bit sequence task. It is important to note that \citet{mohammadpour2024maximum} used a different evaluation approach, computing correlation on sampled molecules rather than on a fixed dataset.

In the sEH task, rewards are divided by a constant of 8, and the reward exponent is set to 10. For QM9, we subtract the 95th percentile from all rewards, resulting in the majority of rewards being distributed between 0 and 1, with 5\% of molecules having a reward greater than 1. The reward exponent is also set to 10. We set $\cR(x) = \exp(-75.0)$ for invalid molecules $x$ in QM9. We track the number of Tanimoto-separated modes as described in \citet{bengio2021gflownet}, using a Tanimoto similarity threshold of 0.7. After normalization, the reward thresholds are $0.875$ and $1.125$ for sEH and QM9, respectively.

We use the graph transformer architecture from \cite{jain2023multi} with 8 layers and 256 embedding dimensions for both tasks. Each model is trained for 20,000 iterations using the Adam optimizer. We present results for learning rates in the set \(\{5 \times 10^{-4}, 10^{-4}, 5 \times 10^{-4}, 10^{-3} \}\). The learning rate for $\rmZ$ is fixed at $10^{-3}$ for all experiments. We apply exponential schedulers to all learning rates at each training step and refer readers to the hyperparameter table for the exact decay values. The batch sizes are 256 for sEH and 128 for QM9. For \SubTB, we set $\lambda = 1.0$, following \cite{madan2023learning}. For \MDQN, we do not use a replay buffer, training the model on-policy while using the same other hyperparameters as in \cite{tiapkin2024generative}. 

For \TLM, the learning rate for $\PB$ is initially set to be 10 times smaller than that for $\PF$, while other approaches use the same learning rate for $\PB$ as for $\PF$.  The backward policy target network for \TLM uses soft updates with a parameter $\tau=0.05$~\citep{silver2014deterministic}. To learn $\log n(s)$ for the \maxent backward policy, we follow the approach from \cite{mohammadpour2024maximum}, implemented in the Recursion Pharmaceuticals GFlowNet library. For \pessim backward policy we store trajectories from the last 20 training iterations in the replay buffer and sample 256 trajectories to compute the loss for sEH and 128 for QM9 (the same as batch size for these environments). 

In line with previous works~\citep{malkin2022trajectory, madan2023learning, tiapkin2024generative}, we use $\varepsilon$-uniform exploration with $\varepsilon = 0.05$. To account for the bias introduced into the gradient estimate of $\nabla_\theta \cL_{\mathtt{TLM}}(\thetaB^t; \tau)$, we linearly anneal $\varepsilon$ to zero over the course of training. This annealing approach is applied to all GFlowNet algorithms and backward methods to ensure a fair comparison in terms of the number of discovered modes.

Each molecule generation experiment was conducted on a single NVIDIA A100 GPU. Table~\ref{tab:mol_params} summarizes the chosen hyperparameters.

\begin{table}[t!]
\begin{center}
\caption{Hyperparameter choices for molecule experiments.}
\label{tab:mol_params}
\vspace{0.3cm}
\begin{tabular}{|l|lr|}
\hline Hyperparatemeter & sEH & QM9  \\
\hline \hline
Reward exponent & \multicolumn{2}{c|}{$10$} \\ 
Batch size & $256$ & $128$ \\
Training iterations & \multicolumn{2}{c|}{$2\times 10^4$} \\ 
Learning rate & 
\multicolumn{2}{c|}{\(\{5 \times 10^{-4}, 10^{-4}, 5 \times 10^{-4}, 10^{-3} \}\)} \\
Learning rate decay $\gamma$ & \multicolumn{2}{c|}{$2^{-1 / 20000}$} \\
$Z$ learning rate for \TB & \multicolumn{2}{c|}{$10^{-3}$}  \\
$Z$ learning rate for \TB decay & \multicolumn{2}{c|}{$2^{-1/50000}$}  \\
Adam optimizer $\beta, \epsilon, \lambda$ & 
\multicolumn{2}{c|}{$(0.9, 0.999), 10^{-8}, 10^{-8}$} \\ 
$\varepsilon$-uniform exploration & \multicolumn{2}{c|}{$0.05$ linearly annealed to $0.0$} \\
Number of transformer layers & \multicolumn{2}{c|}{$8$} \\
Hidden embedding size & \multicolumn{2}{c|}{$256$} \\
\hline
    \SubTB $\lambda$ & \multicolumn{2}{c|}{$1.0$} \\
\hline
\MDQN $\alpha$ & \multicolumn{2}{c|}{$0.15$} \\
\MDQN $\l_0$ & \multicolumn{2}{c|}{$-500$} \\
\MDQN leaf coefficient & \multicolumn{2}{c|}{$10$} \\
\hline
\pessim buffer size & $20 \times 256$ & $20 \times 128$ \\ 
\hline
\TLM backward learning rate & \multicolumn{2}{c|}{$0.1 \times$ learning rate} \\
\TLM backward lr decay $\gamma$ & $2^{-1 / 5000}$ & $2^{-1 / 20000}$ \\ 
\TLM backward target model $\tau$ & \multicolumn{2}{c|}{$0.05$} \\
\hline
\end{tabular}
\end{center}
\end{table}

\clearpage

\newpage

\subsection{Full plots}

\begin{figure}[h!]
    \centering
    \includegraphics[width=1\linewidth]{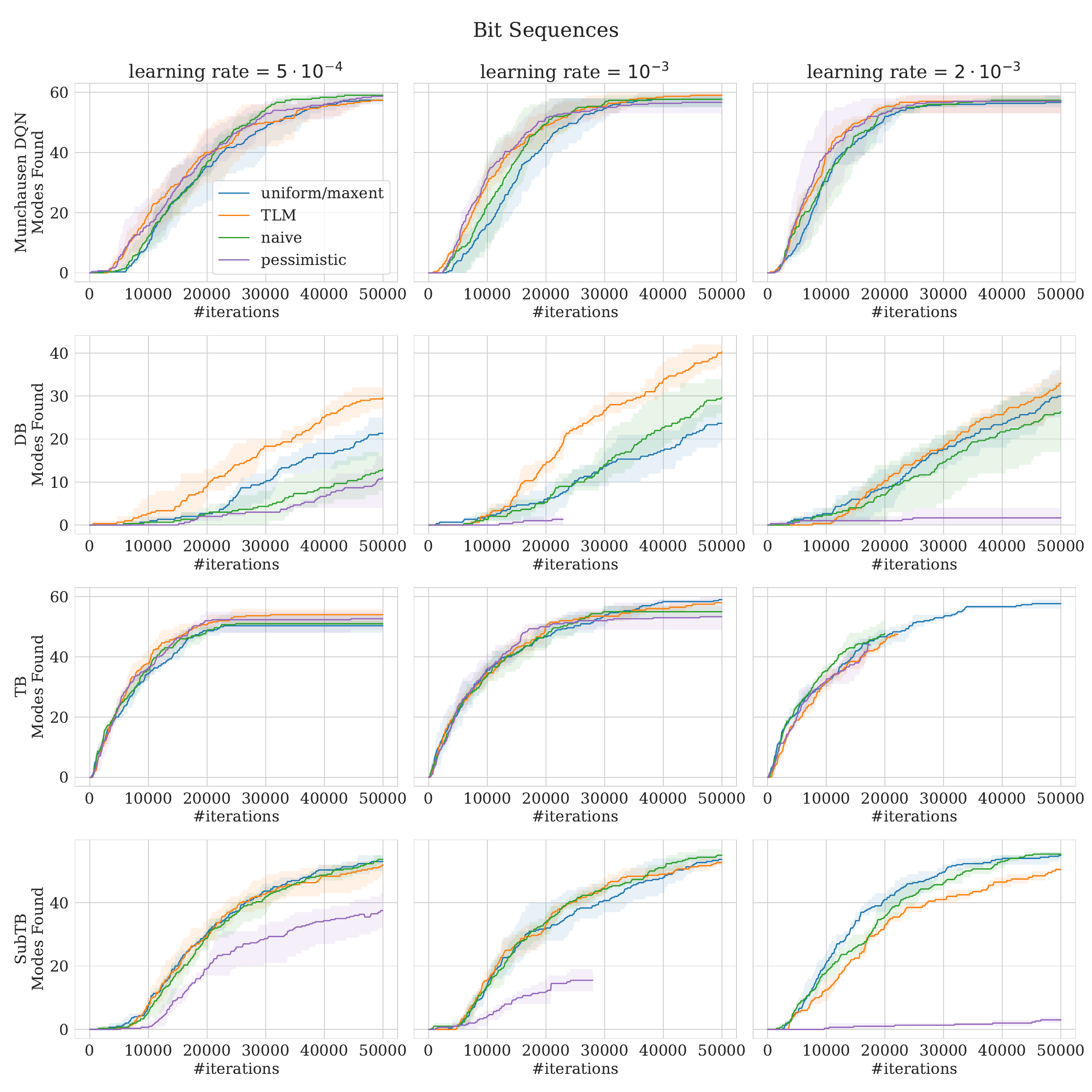}
    % \vspace{-0.3cm}
    \caption{Bit Sequences, the number of modes discovered over the course of training for different methods and a learning rate $\in \{ 5 \cdot 10^{-4}, 10^{-3}, 2 \cdot 10^{-3} \}$. Some results for the learning rates of $10^{-3}$ and $2 \cdot 10^{-3}$ are not full because of exploding gradients at certain points in training.} 
    \label{fig:modes_bitseq_full}
\end{figure}

\begin{figure}[!t]
    \centering
    \includegraphics[width=1\linewidth]{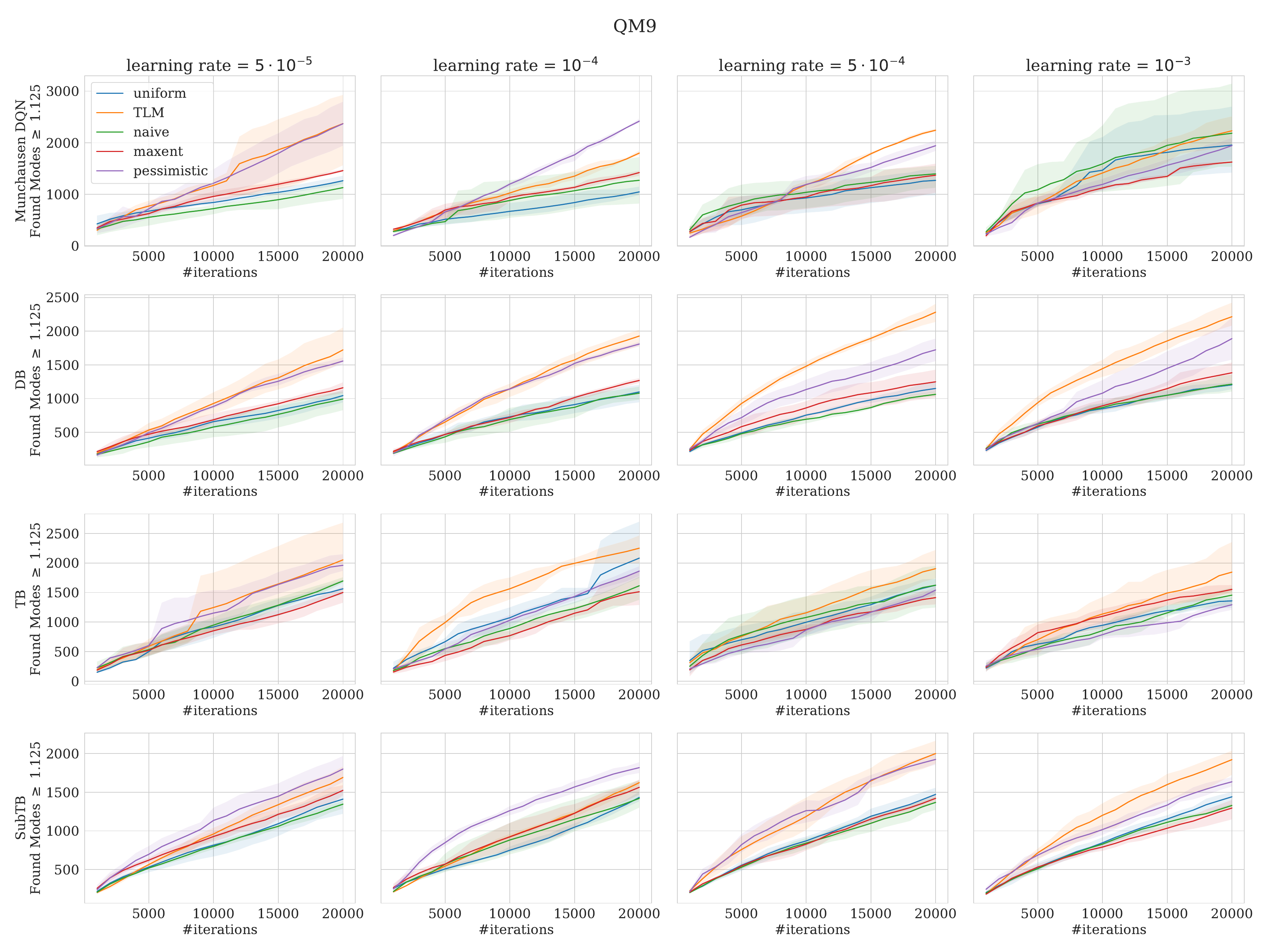}
    \includegraphics[width=1\linewidth]{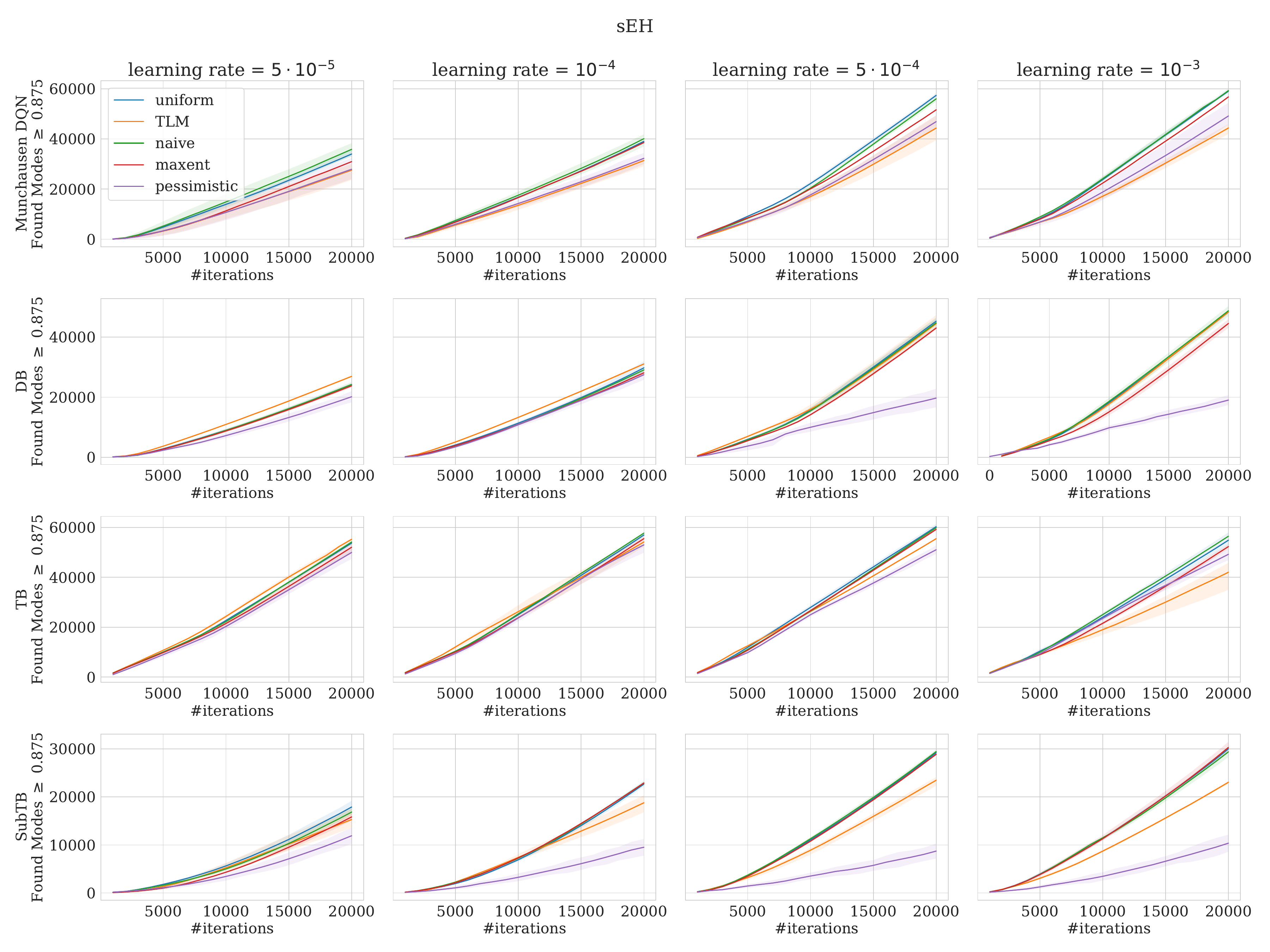}
    \vspace{-0.5cm}
    \caption{\textbf{Top row:} QM9, the number of Tanimoto-separated modes discovered over the course of training with reward higher or equal to $1.125$ for different methods and learning rate in $\in \{ 5 \cdot 10^{-5}, 10^{-4}, 5 \cdot 10^{-4}, 10^{-3} \}$. \textbf{Bottom row:} sEH, the number of Tanimoto-separated modes discovered over the course of training with reward higher or equal to $0.875$ for different methods and learning rate $\in \{ 5 \cdot 10^{-5}, 10^{-4}, 5 \cdot 10^{-4}, 10^{-3} \}$.} 
    \label{fig:modes_mols_full}
\end{figure}

\end{document}